\documentclass[runningheads,a4paper]{llncs}

\usepackage{amssymb}
\usepackage{graphicx}

\usepackage{bm}
\usepackage{url}
\usepackage{array}
\usepackage{tabularx}
\usepackage{amsmath}
\usepackage{tikz}
\usepackage{multirow}

\usetikzlibrary{arrows,snakes,shapes}

\newcommand{\norm}[2]{|| #1 ||_{#2}}

\usepackage{url}
\urldef{\mailsa}\path|{freddie.astrom,michael.felsberg, george.baravdish,claes.lundstrom}@liu.se|

\begin{document}

\mainmatter  

\title{Targeted Iterative Filtering}


%



\author{Freddie {\AA}str{\"o}m\inst{1,2}\and Michael Felsberg\inst{1,2}\and \\ George Baravdish\inst{3} \and Claes Lundstr{\"o}m\inst{2,4}}


\authorrunning{Targeted Iterative Filtering}


\institute{Computer Vision Laboratory, Link{\"o}ping University, Sweden \and
Center for Medical Image Science and Visualization, Link{\"o}ping University, Sweden\and
Department of Science and Technology, Link{\"o}ping University, Sweden \and
Sectra AB, Sweden  \\
\mailsa} 


\toctitle{Targeted Iterative Filtering}
\tocauthor{{\AA}str{\"o}m et al.}

\maketitle

\renewcommand{\div}{\operatorname{div}}
\newcommand{\paraheading}[1]{\vspace*{2mm}\noindent \textbf{#1}}

\newcolumntype{M}{>{\centering\arraybackslash}m{\dimexpr.25\linewidth-2\tabcolsep}}
\renewcommand{\labelitemi}{$\bullet$}

\begin{abstract}
  The assessment of image denoising results depends on the respective application area, \emph{i.e.} image compression, still-image acquisition, and medical images require entirely different behavior of the applied denoising method. In this paper we propose a novel, nonlinear diffusion scheme that is derived from a linear diffusion process in a value space determined by the application. We show that application-driven linear diffusion in the transformed space compares favorably with existing nonlinear diffusion techniques.    

\end{abstract}


\section{Introduction}
Many image processing techniques such, as denoising algorithms, aim to improve the quality of images. Naturally, the definition of quality is dependent on the situation where the images are used. The focus of this work is on denoising algorithms and our approach concentrates on that noise that actually will be visible to an observer, rather than data noise in general. 

A widely applied denoising technique was introduced by Perona and Malik \cite{Perona90scale-spaceand} who proposed a nonlinear partial differential equation (PDE) diffusion scheme. It extends the linear diffusion scheme which is based on the image gradient $\nabla u$ with an edge stopping function $g(|\nabla u|)$ \emph{i.e.}
\begin{center}
\begin{tabular}{c m{2cm}c}
  $\nabla u$ & \centering $\rightarrow$ & $ g(|\nabla u|) \nabla u$ \\
  Linear     &                    & Perona and Malik 
\end{tabular}
\end{center}
where a modification of the diffusion speed is based on the value domain of the image gradient. Another PDE model which has received much attention in recent years is the tensor-based diffusion scheme of Weickert~\cite{Weickert96anisotropicdiffusion}. These diffusion models require the determination of parameters often estimated from the input data. Thus the performance of these methods depend on  the accuracy of the parameter estimation. A particular problem is that image structure of different scale can be present within the same value ranges, hence spatially varying contrast parameters are required. 


In this work we show that by the use of an application dependent transformation to the input data space given by a function $m(u)$, we obtain a nonlinear diffusion formulation. The novel formulation modifies the value domain of $u$ rather than the gradient domain as done in Perona and Malik diffusion \emph{i.e.}
\begin{center}
\begin{tabular}{c m{2cm}c c}
  $\nabla u$ & \centering $\rightarrow$ & $ \nabla m(u)$    \\ 
  Linear     &                               &     Targeted diffusion
\end{tabular}
\end{center}
An energy functional is formulated where the regularization term is expressed using the mapping function $m(u)$ and the resulting Euler-Lagrange equation can be interpreted in terms of nonlinear diffusion. The difference between the edge-stopping function $g$ and the mapping function $m$ is that the former is an data-driven ad-hoc selection whereas $m$ is application-driven.


Image processing tools that target specific regions of an image are relevant in many areas of computer vision, and include high dynamic range imaging~\cite{Debevec:1997:RHD:258734.258884,Dicarlo2000}, infrared imaging~\cite{vollmer2010infrared} and medical imaging~\cite{prokopgalanski2003}. One such region based diffusion filtering method was proposed by Ka\v{c}ur et al.~\cite{Kacur97} who generalized the Perona and Malik diffusion. They model the diffusion PDE with an additional nonlinear function on the range domain from which the gradient is computed. This allows the filtering process to be directed to regions containing particular image structures. Their framework reduces the filtering process in regions determined by the user, but the method still requires the determination of a parameter corresponding to an edge-stopping function within the region of filtering. 

In this work our main contributions are
\begin{itemize}
  \item A novel diffusion scheme is derived by using a mapping function in a variational formulation of standard image diffusion. 
  \item Necessary and sufficient conditions are derived to determine if the solution given by the Euler Lagrange equation yield a minimum of the proposed energy functional.   
  \item We show how the mean and variance of noise present in the signal domain is transformed by the mapping function.
  \item In experiments with computed tomography (CT) images of different noise levels, it is shown that the novel scheme compares favorably to nonlinear scalar diffusion on a data set of 400 images using the structural similarity index \cite{Wang2004}. 
\end{itemize}

\newcommand{\imgsize}{0.18}

\section{Image diffusion}
\subsection{Linear diffusion}
\label{sec:DIFFUSION}
The variational approach to isotropic image diffusion is to minimize the energy functional 
\begin{equation}
  E(u) =  \int_\Omega (u - u^0)^2 \; d {\bm x}  +  \lambda\int_\Omega |\nabla u|^2 \; d {\bm x} \enspace , 
  \label{eq:E_diffusion}
\end{equation}
where $\bm{x} \in \Omega$ and $u^{0}$ denotes the observed image. The constant $\lambda$ is a positive scalar which determines the effect of the regularization. The domain  $\Omega$ is a grid described by the image size in pixels, and $\nabla = \partial_{\bm{x}} = \begin{pmatrix} \partial_{x_1}, & ..., &\partial_{x_n} \end{pmatrix}^T$ is the gradient operator, and $\mathrm{dim}(\nabla) = n$ is the number of dimensions. Other types of regularization terms have previously been investigated \cite{Rudin:1992:NTV:142273.142312,Baravdish11}.
%
%
To minimize $E(u)$, one finds the stationary point $u$ by computing the Euler-Lagrange (E-L) equation
\begin{equation*}
  E_u(u) = 0 \quad \mbox{in} \quad \Omega, \quad \nabla u \cdot \bm{n} = 0 \quad \mbox{on} \quad \partial \Omega \enspace ,
  \label{eq:EL_min}
\end{equation*}
where $\bm{n}$ is the normal vector on the boundary $\partial \Omega$. The E-L equation for~\eqref{eq:E_diffusion} reads
\begin{equation}
  \left \{
    \begin{array}{rlll}
      u-u^{0} - \lambda \Delta u & = 0 & \mbox{in} & \Omega \\
      \nabla u \cdot \bm{n} & = 0 & \mbox{on} &  \partial \Omega \\
    \end{array} \right .
  \label{eq:EL_linear}
\end{equation}
where $\Delta u$ is the Laplacian operator. We solve \eqref{eq:EL_linear} by solving an initial value problem (IVP) and obtain the diffusion equation which has a closed form solution. 


\subsection{Nonlinear diffusion}
Before deriving the proposed diffusion scheme, we define the nonlinear scalar diffusion process of Perona and Malik (PM)~\cite{Perona90scale-spaceand} as 
\begin{equation}
  \left \{
    \begin{array}{rlll}
      u-u^{0} - \lambda \div (g(|\nabla u|) \nabla u) & = 0 & \mbox{in} & \Omega \\
      \nabla u \cdot \bm{n} & = 0 & \mbox{on} &  \partial \Omega \\
    \end{array} \right .
  \label{eq:EL_PM}
\end{equation}
where $g(s) = (1 + ( s/k )^2 )^{-1}$
is a popular choice as the diffusivity function and $k$ is a contrast parameter fixed to suppress the flux at edges and lines in the image. It will be seen that the diffusion process introduced in the subsequent section can be viewed as a nonlinear filter, closely related to PM-diffusion. We solve \eqref{eq:EL_PM} by solving an IVP and obtain the diffusion equation. 

Tensor-based nonlinear diffusion is achieved defining $T = w*\nabla u \nabla u^T$ where $*$ is a convolution operator and $w$ is a Gaussian filter~\cite{Weickert96anisotropicdiffusion,Forstner87,diva2:274026}. Then the diffusion tensor can be computed as $D(T) = O^T g(\Lambda) O \enspace$ where $O$ are the eigenvectors and $\Lambda$ the eigenvalues of $T$~\cite{Fberg2011}. This gives the PDE 
\begin{equation}
  \begin{array}{rlll}
    u-u^{0} - \lambda \div (D(T) \nabla u) & = 0 & \enspace . 
  \end{array}
  \label{eq:EL_AD}
\end{equation}



\section{Targeted iterative filtering}
\label{sec:VED}
 In order to simultaneously consider the signal domain and the application dependent transformation of an image, we express the regularization term of the energy functional \eqref{eq:E_diffusion} in the transformed domain. Let $m (u({\bm x}))$ be a \emph{mapping function} that maps $u({\bm x})$ to its application domain, then define 
\begin{equation}\label{eq:main_func}
  E(u) = \int_\Omega (u - u^0)^2 d{\bm x} + \lambda \int_\Omega |\nabla m (u)|^2 d{\bm x}
\end{equation}
where $m(u) \in {\cal C}^3(\Omega)$ and $\lambda > 0$ is a parameter determining the influence of the regularization term. In the subsequent sections we derive the necessary and sufficient conditions for the functional $E(u)$ to attain a local minimum (for details see supplementary material). 

\subsection{Necessary conditions for local minimum} 
The variational derivative of the the regularization term of $E(u)$ is computed using the G$\hat{\mathrm{a}}$teaux derivative
\begin{align*}
\langle \partial R, v \rangle = \lim_{\varepsilon \rightarrow 0} \frac{|\nabla m (u + \varepsilon v)|^2 - |\nabla m (u)|^2}{\varepsilon} \enspace ,
\end{align*}
where $v \in {\cal{C}}^1(\Omega)$ is an arbitrary function such that $\partial_n v |_{\partial \Omega} = 0$. Using the chain rule $\nabla m(u) = m'(u) \nabla u$ we obtain
\begin{equation*}
\langle \partial R, v \rangle = \lim_{\varepsilon \rightarrow 0} \frac{|\nabla u|^2 (m'(u+\varepsilon v)^2 - m'(u)^2) + m'(u+\varepsilon v)^2 (\varepsilon 2 \nabla u^t \nabla v + \varepsilon^2 |\nabla v|^2)}{\varepsilon} 
\end{equation*}
With Green's identity and Neumann boundary conditions we obtain 
\begin{equation*}
\langle \partial R, v \rangle = \left ( 2 |\nabla u|^2 m'(u)m''(u) - 2 \mathrm{div} (m'(u)^2 \nabla u^t) \right ) v \enspace .
\end{equation*}
Now observe that 
$ \mathrm{div}(m'(u)^2 \nabla u)  =  2 m'(u) m''(u) |\nabla u|^2 + m'(u)^2 \Delta u $.
Using this result, and since $v \neq 0$, the E-L equation reads
\begin{equation}
  \left \{
    \begin{array}{rlll}
      u - u^0 - \lambda(\mathrm{div}(m'(u)^2 \nabla u ) + m'(u)^2 \Delta u) & = 0 & \mbox{in} & \Omega \\
      m'(u)^2 \nabla u \cdot \bm{n} & = 0    & \mbox{on} & \partial \Omega 
    \end{array} \right .
  \label{eq:VEDlinear}
\end{equation}
Since $m'(u)^2 \geq 0 $ it is guaranteed that a solution of \eqref{eq:VEDlinear} exists. Compared to \eqref{eq:EL_PM}, the divergence operator is modulated with the squared steepness of the mapping function. Also, the Laplacian is weighted with the same factor. 
If and only if $m$ is a globally linear function, \eqref{eq:VEDlinear} becomes identical to \eqref{eq:EL_linear}. The difference to nonlinear diffusion is easiest explained in terms of the Lagrangian: Replacing $g(|\nabla u|)$ with $m'(u)^2$ means to replace the robust error function with an intensity dependent factor. 


\subsection{Sufficient conditions for local minimum}
In this section we derive sufficient conditions for the solution of the E-L equation to be a minimum of the regularization term in \eqref{eq:main_func}. The result is summarized in the theorem below. 
%
%
%
We remark that if the mapping function is a strict monotone function, the regularization term in~\eqref{eq:main_func} is obviously convex and the necessary condition is also a sufficient condition. However, in the general case, $m$ is not always a strict monotone function, and this is the case we consider here. 

\begin{theorem}
Let $u^0$ be an observed image in a domain $\Omega \subset \mathbb{R}^2$, 
and denote by $E(u)$ the functional 
\begin{equation*}\label{e5_thm}
E(u) = \int_\Omega (u-u^0)^2\,d\bm{x} + \lambda \int_{\Omega} |\nabla m(u)|^2\,d\bm{x}
\end{equation*}
where $u \in \mathcal{C}^2$ and $m(u) \in \mathcal{C}^3$. Let $\varepsilon > 0$ be arbitrary and consider the set 
$$
B_{\varepsilon} = \left\{h, \nabla h \in L^2(\Omega):\ \norm{h}{L^2(\Omega)}^2 
  \leq \varepsilon^2/C_M,\quad  \norm{\nabla h}{L^2(\Omega)}^2 \geq \varepsilon  \right\}
$$
where 
$$
 C_M = \max_{\bm{x} \in \Omega} \left |[m'(u^*(\bm{x}))m'''(u^*(\bm{x})) -  3(m''(u^*(\bm{x})))^2] |\nabla u^*(\bm{x})|^2 \right | \enspace .
 $$
Then $u^*$ is a local minimum of $E(u)$ given by the solution of the E-L equation (\ref{eq:VEDlinear}) if there exists $\bm{\xi} \in \Omega$ such that 
\begin{equation} \label{d30_thm}
  (m'(u^*(\bm{\xi})))^2 \norm{\nabla h}{L^2(\Omega)}^2 > \varepsilon^2 \enspace ,
\end{equation}
for every $h \in B_{\varepsilon}$.
\end{theorem}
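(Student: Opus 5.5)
The argument is a second-variation (Legendre-type) computation. Expand $E(u^*+h)$ around the E-L solution $u^*$ of \eqref{eq:VEDlinear} and show that the second variation is positive for every perturbation $h\in B_\varepsilon$.

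Write $m_k = m^{(k)}(u^*)$. Taylor-expand $m'(u^*+h)^2$ to second order in $h$, using $m\in\mathcal{C}^3$:
\[
m'(u^*+h)^2 = m_1^2 + 2m_1m_2 h + (m_2^2+m_1m_3)h^2 + o(h^2).
\]
Substituting this together with $|\nabla(u^*+h)|^2 = |\nabla u^*|^2 + 2\nabla u^*\cdot\nabla h + |\nabla h|^2$ into $E$, the first-order terms vanish because $u^*$ satisfies \eqref{eq:VEDlinear} with its Neumann condition. This is the computation already carried out in the necessary-conditions subsection. The quadratic part is
\[
\delta^2E(h) = \int_\Omega h^2 + \lambda\int_\Omega \Big[(m_2^2+m_1m_3)|\nabla u^*|^2h^2 + 4m_1m_2\, h\,\nabla u^*\cdot\nabla h + m_1^2|\nabla h|^2\Big]\,d\bm{x}.
\]

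The mixed term is handled by integration by parts. Write $2h\nabla h=\nabla(h^2)$, so that
\[
\int_\Omega 4m_1m_2\,h\,\nabla u^*\cdot\nabla h = -2\int_\Omega \mathrm{div}(m_1m_2\nabla u^*)\,h^2,
\]
where the boundary term is dropped by the Neumann condition. Expanding the divergence gives $(m_2^2+m_1m_3)|\nabla u^*|^2 + m_1m_2\Delta u^*$. The Laplacian contribution is then eliminated using the E-L equation itself, which expresses $\Delta u^*$ through $(u^*-u^0)/\lambda$ and $m_1m_2|\nabla u^*|^2$. This is the step where the specific coefficient $m_1m_3-3m_2^2$ appearing in $C_M$ must come out. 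I expect it to be the main obstacle: the bookkeeping has to produce exactly that combination, and the leftover $(u^*-u^0)$ term has to be absorbed, or shown nonnegative, together with the fidelity term $\int h^2$.

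Granting this, the zeroth-order part of the integrand is bounded below by $-\lambda C_M h^2$, so
\[
\delta^2E(h) \ge \lambda\Big(\int_\Omega m_1^2|\nabla h|^2 - C_M\|h\|^2_{L^2}\Big).
\]
For the gradient term, apply the mean value theorem for integrals to $\int m_1^2|\nabla h|^2$, which is justified since $m_1^2$ is continuous and nonnegative. This produces a point $\bm\xi$ with $\int m_1^2|\nabla h|^2 = m'(u^*(\bm\xi))^2\|\nabla h\|^2_{L^2}$, and by hypothesis \eqref{d30_thm} this exceeds $\varepsilon^2$. On $B_\varepsilon$ we have $C_M\|h\|^2\le\varepsilon^2$. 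Hence $\delta^2E(h)>0$ for all $h\in B_\varepsilon$, so $u^*$ is a local minimum in the sense of the theorem.

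One gap remains in the mean-value step. The point $\bm\xi$ produced there depends on $h$, whereas the statement asks for a single $\bm\xi$ that works for every $h$. I would handle this by reading the hypothesis as holding at the mean-value point, or by replacing it with the minimum of $m_1^2$ where necessary.
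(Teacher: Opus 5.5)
Your outline follows the paper's proof. Once the second variation is written down, both arguments bound its zeroth-order part below by $-C_M\norm{h}{L^2(\Omega)}^2\geq-\varepsilon^2$ on $B_\varepsilon$, apply the mean value theorem to $\int_\Omega (m'(u^*))^2|\nabla h|^2\,d\bm{x}$, and conclude positivity. The paper works with $J$ alone, which is harmless because the fidelity term only adds $\norm{h}{L^2(\Omega)}^2$. The genuine gap is the step you grant: it is the only nontrivial step, and it does not hold.

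Finish your own computation, taking $(m')^2\Delta u^*$ from the Euler--Lagrange equation of $E$ in the form $u^*-u^0=\lambda\,(m'm''|\nabla u^*|^2+(m')^2\Delta u^*)$. This gives $m'm''\Delta u^*=\frac{m''}{m'}\frac{u^*-u^0}{\lambda}-(m'')^2|\nabla u^*|^2$, and the zeroth-order part of $\delta^2E(h)$ becomes
\[
\int_\Omega\Big[1-2\,\frac{m''(u^*)}{m'(u^*)}\,(u^*-u^0)\Big]h^2\,d\bm{x}
+\lambda\int_\Omega\big[(m''(u^*))^2-m'(u^*)\,m'''(u^*)\big]|\nabla u^*|^2h^2\,d\bm{x}.
\]
So the bracket is $(m'')^2-m'm'''$, not $m'm'''-3(m'')^2$, and $|m'm'''-3(m'')^2|$ does not dominate it. For a mapping with $m'm'''=3(m'')^2$, e.g.\ $m(u)=-2\sqrt{c-u}$, the theorem's bracket vanishes identically while the true one equals $-2(m'')^2<0$. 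In addition, the fidelity-coupled term has no sign, needs $m'(u^*)\neq0$ throughout $\Omega$, and is controlled by none of the hypotheses. Hence the bound $-\lambda C_M h^2$ with the stated $C_M$ cannot be derived, and your argument does not close.

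Following the paper does not repair this step either. Its proof states $\tfrac12L_2(h,h)=\int_\Omega[m'm'''-3(m'')^2]|\nabla u^*|^2h^2\,d\bm{x}+\int_\Omega(m')^2|\nabla h|^2\,d\bm{x}$ after ``rewriting the E-L equation using $m'(u^*)\neq0$''. That formula is exactly what you get by replacing $\nabla u^*\cdot\nabla h$ in the mixed term $4m'm''h\,\nabla u^*\cdot\nabla h$ with $-\frac{m''}{m'}|\nabla u^*|^2h$. That amounts to treating the integrand of $L_1(h)=0$ as vanishing pointwise, which the Euler--Lagrange equation does not give.

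What does go through is to drop the Euler--Lagrange substitution altogether. After your integration by parts,
$\delta^2E(h)=\norm{h}{L^2(\Omega)}^2+\lambda\int_\Omega q\,h^2\,d\bm{x}+\lambda\int_\Omega(m'(u^*))^2|\nabla h|^2\,d\bm{x}$, with $q=-\big((m'')^2+m'm'''\big)|\nabla u^*|^2-2m'm''\Delta u^*$, which is continuous because $u^*\in\mathcal{C}^2$. Replace $C_M$ by $\max_{\bar\Omega}|q|$ and the $h$-dependent mean-value point by $\min_{\bar\Omega}(m'(u^*))^2$, as you already suggest. Your remaining steps then give $\delta^2E(h)>0$ on $B_\varepsilon$. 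That proves a corrected statement, not the theorem with its stated $C_M$.
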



\textbf{Proof}. In order to find the sufficient condition for a minimum, define the regularization term in the functional as
\begin{equation*}\label{e5}
J(u) = \int_{\Omega} |\nabla m(u)|^2\,d\bm{x} = \int_{\Omega} (m'(u))^2 |\nabla u|^2\,d\bm{x}
\end{equation*}

Given a function $\varphi\in{\cal C}^3$, a third order Taylor expansion at the point $0$ is 
\begin{equation*}\label{e15}
\varphi(a) - \varphi(0) =a\varphi'(0) +
\frac{a^2}{2}\varphi''(0) +
\frac{a^3}{6}\varphi'''(a\theta)
\quad 0<\theta<1 \enspace .
\end{equation*}

Let $h\in {\cal C}^1$, then define $\varphi(a) = J(u+ah)$, which determines the first variation $\delta J$ of $J(u)$ as 
$$
\delta J  = \lim_{a\to0}\frac{J(u+ah)-J(u)}{a}=
 \lim_{a\to0}\frac{\varphi(a)-\varphi(0)}{a}=\varphi'(0)
$$
In the same way the second variation $\delta^2 J$ follows. Since $\delta J$ is a linear functional in $h$ and $\delta^2 J$ is a quadratic form in $h$ define $L_1(h)  =\delta J = \varphi'(0)$ and $L_2(h,h) = \delta^2 J = \varphi''(0)$.
%
%
Given that $\varphi$ is differentiable then so is $J$. If $a=1$ then the Taylor expansion is given by
\begin{equation}\label{e20}
J(u(x)+h(x)) - J(u) = L_1(h) + L_2(h,h)  + ||h||^2 \rho(h),
\end{equation}
where $\rho(h)\to0$, as $h\to0$. 

A necessary condition of $u^*$ to be a minimum point of the functional $J(u)$ is
\begin{equation}\label{e30}
\varphi'(0) = L_1(h) = 
 2\int_{\Omega} [ m'(u^*) m''(u^*) |\nabla u^*|^2 h
    + (m'(u^*))^2 \nabla u^* \cdot \nabla h ] \,d\bm{x} = 0
\end{equation}
for every $h$ in a neighborhood of $u^*$. According to the E-L equation the solution $u^*$ must satisfy that 
 \begin{equation}\label{d50} 
   m'(u^*) \neq 0
 \end{equation}
otherwise the trivial solution $J(u^*) = 0$ is obtained. 
Differentiating $\varphi'(a)$ and rewriting the E-L equation using condition~\eqref{d50} obtain $L_2(h,h)$ as  
\begin{equation*} \label{d20}
\frac{1}{2}L_2(h,h) = \int_{\Omega}  [m'(u^*) m'''(u^*) - 3 (m''(u^*))^2] |\nabla u^*|^2 h^2\,d\bm{x} + \int_{\Omega} (m'(u^*))^2 |\nabla h|^2\,d\bm{x}
\end{equation*}
Since $L_2(h,h) > 0$ implies a minimum, we consider the first integral. Assume $m\in{\cal C}^3$ and $u\in{\cal C}^1$, then there is an upper bound $C_M>0$ such that
$$
|[m'(u^*) m'''(u^*) - 3 (m''(u^*))^2] |\nabla u^*|^2  |\leq C_M .
$$
%
Let $\varepsilon > 0$ and $B_{\varepsilon}$ be a set defined by
$$
B_{\varepsilon} = \left\{h, \nabla h \in L^2(\Omega):\ \norm{h}{L^2(\Omega) }^2 
  \leq \varepsilon^2/C_M ,\quad  \norm{\nabla h}{L^2(\Omega)}^2 \geq \varepsilon  \right\}
$$
Given that $h\in B_{\varepsilon}$, then the first integral of $L_2(h,h)$ reads 
\begin{equation*}\label{d30}
\int_{\Omega} [m'(u^*) m'''(u^*) - 3 (m''(u^*))^2] |\nabla u^*|^2 h^2\,d\bm{x}
\geq -C_M \int_{\Omega} h^2\,d\bm{x} \geq -\varepsilon^2.
\end{equation*}
Since $h \in B_{\varepsilon}$ we have
$$
 \int_{\Omega} (m'(u^*))^2 |\nabla h|^2\,d\bm{x} \neq 0 \enspace .
$$
By the mean value theorem of calculus there exists a $\bm{\xi} \in \Omega$ such that $m'(u^*(\bm{\xi})) \neq 0$ and 
$$
 \int_{\Omega} (m'(u^*))^2 |\nabla h|^2\,d\bm{x} = m'(u^*(\xi))^2 \norm{\nabla h}{L^2(\Omega)}^2
$$
Hence 
\begin{align}
\notag L_2(h,h) & \geq 2\int_{\Omega} (m'(u^*))^2 |\nabla h|^2\,d\bm{x} - 2\varepsilon^2 \geq 2 (m'(u^*(\bm{\xi})))^2 \norm{\nabla h}{L^2(\Omega)}^2 - 2\varepsilon^2 \\
\label{d30_old}  & > 2 \varepsilon \, [(m'(u^*(\bm{\xi})))^2 -  \varepsilon \,] > 0
\end{align}
since $h \in B_{\varepsilon}$ and we can always chose $\varepsilon < (m'(u^*(\bm{\xi})))^2$ 
which is the sufficient condition for $u^*$ to be a local minimum of $J(u)$. And the theorem follows. \qed



\section{Noise estimation in the transformed domain}
Due to the nonlinear mapping function, $m$, it is of interest to investigate the transformation of the first and second statistical moments of the input signal. We assume that the image signal can be described by a linear model
\begin{equation*}
  u^0 = u_{0} + \eta \enspace ,
\end{equation*}
where $\eta \sim \mathcal{N}(\mu,\sigma^2)$ and $u^{0}$ is the observed signal, $u_0$ is the noise-free signal and $\eta$ is a noise component normally distributed with mean $\mu$ and variance $\sigma^2$. 
The mean value and the variance are estimated using a second order Taylor series of the mapping function, then the mean value and variance estimates  
\begin{align}
\label{eq:mean-trans}  \hat{\mu}_m & = m(u_0 + \mu) + \frac{1}{2}m''(u_0 + \mu) \, \sigma^2 \\
\label{eq:var-trans}  \hat{\sigma}^2_m  & = \Psi[m](u_0 + \mu) \, \sigma^2 
\end{align}
where
$$
\Psi[m](u_0 + \mu) = m'(u^0)^2 - m(u^0)m''(u^0)
$$
is the energy operator~\cite{springerlink:10.1007/11550518_61}. This shows that the mean value in the transformed domain will depend on the curvature of the transformation used, implying that the mapping will not preserve the average intensity level of the input space. Also that the noise variance in the signal domain is amplified by the energy operator. For complete derivations see supplementary material.

\section{Application to medical imaging}
For the purpose of evaluating the proposed application-driven diffusion scheme we consider the application of medical visualization. We make no claim on superiority over existing techniques in medical visualization, merely we limit ourselves to diffusion methods. The diffusion methods investigated are, the novel targeted filtering scheme (TF), linear diffusion (LD), nonlinear diffusion (PM) and tensor-based image diffusion (AD). 

Visualizations in medical imaging are computed by transfer functions, which usually are piecewise linear~\cite{prokopgalanski2003}. However, sufficiently similar functions produce visualizations that are visually indistinguishable. We use combinations of sigmoid functions, see Fig. \ref{fig:tf_example}, since they are three times continuously differentiable. 
\renewcommand{\imgsize}{0.8}
\begin{figure}[b]
  \centering
  \includegraphics[width=\imgsize\linewidth]{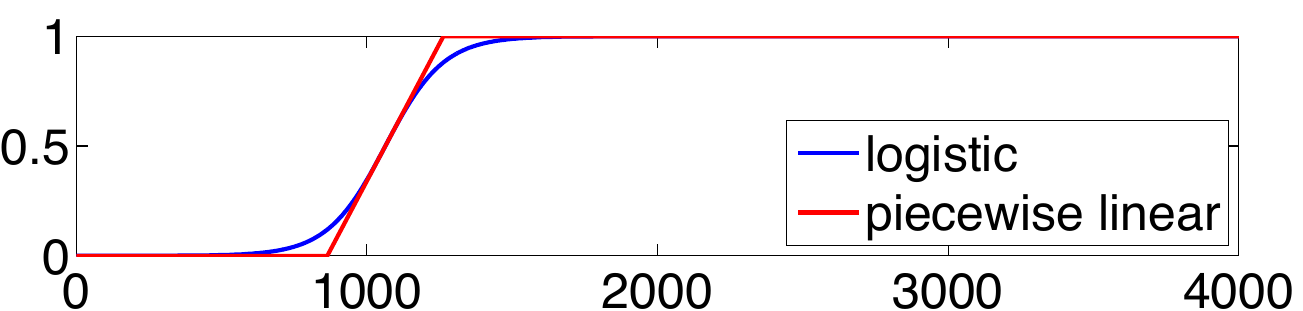}
  \caption{Example of mapping function}
  \label{fig:tf_example}
\end{figure}

\subsection{Selection of mapping function}
Let $m : \mathbb{R}^2 \rightarrow [0, 1]$ be the visualization mapped using a transfer function $m \in {\cal C}^3$ computed from two user defined thresholds $u(\bm x)=u_1$ to $u(\bm x)=u_2$. We define a sigmoid function
\begin{equation}
  m(u(\bm x),a,b)=(1 + \exp(-(u(\bm x)-b)/a))^{-1}\enspace,
  \label{eq:visTF}
\end{equation}
where $a = (u_2-u_1)/4$ is the steepness of the sigmoid function and $b = (u_1+u_2)/2$ defines the offset.
For this choice of mapping function we show that the sufficient condition in~\eqref{d30_thm} is satisfied. 
Then the lower bound of $(m'(u^*))^2$ is given by
\begin{equation*}
  (m'(u^*))^2 = \dfrac{1}{a^2} \dfrac{e^{\frac{2(b-u^*)}{a}}}{(1+ e^{\frac{b-u^*}{a}} )^{4}} 
  \geq \dfrac{1}{a^2} \dfrac{e^{\frac{2(b-1)}{a}}}{(e^{\frac{b}{a}} + e^{\frac{b}{a}} )^{4}} 
  \geq  \dfrac{1}{a^2 16 e^{\frac{2(b+1)}{a}}} 
\end{equation*}
thus the condition \eqref{d30_old} is replaced with $(16 a^2 e^{\frac{2(b+1)}{a}})^{-1} > \varepsilon$.
Details on the determination of the lower bound can be found in the supplementary material. 

\subsection{Numerical aspects}
The derived E-L equation \eqref{eq:VEDlinear} is solved as an IVP problem discretizised using a standard forward Euler scheme. A forward and backward finite difference scheme is used to approximate the image derivatives. 

The derivatives of the mapping function, $m$ are computed analytically. However, before evaluating the derivatives of $m(u)$, the signal $u$ is regularized with a small Gaussian filter. Also to remedy the fact that different propagation speeds are obtained for different slopes of the mapping function, derivatives are normalized to attain a global maximum of $1$. The implementation is available here~\cite{code}. 


\subsection{Experiment setup}
For the evaluation, we add zero mean Gaussian noise to a set of computed tomography (CT) images. The motivation for using additive noise is due to the projection data obtained from the CT scanner contains multiplicative noise. In the CT reconstruction the logarithm of the data is taken, thus multiplicative noise can be modeled as additive noise. All images were scaled to an 8-bit quantisation representation and zero mean Gaussian noise with standard deviation $\sigma = \{5, 10, 15\}$ was added to the test images in the signal domain. According to~\eqref{eq:var-trans}, the noise levels in the visualization domain is $\sigma_{m} = \{51.19, 102.38, 153.36 \}$ using a mapping function with endpoints $u_1 = 864$ and $u_2 = 1264$ where the endpoints are represented using Hounsfield units. 

All diffusion methods were set to iterate the solution until the peak signal to noise (PSNR) value no longer increases. The steplength was set to $0.05$ for all methods except for the proposed method which utilizes the slope of the mapping function as its steplength
$  \lambda = \min(1/(u_2 - u_1)),0.25) $
where $0.25$ is the maximum steplength to ensure stability in the case of linear diffusion~\cite{Weickert96anisotropicdiffusion}. 

The PM and AD contrast parameter was set using the estimated noise levels $\sigma_{est}$ based on~\cite{Fberg2011} and computed according to~\cite{Wolfgang2000} as $k = (e-1)(e-2)^{-1}\sigma_{est}^2$.

 
The peak signal to noise measure (PSNR) and the structural similarity index (SSIM) \cite{Wang2004} was used to evaluate the performance of the proposed algorithm.


\subsection{Results}
\label{sec:2devaluation}
Table~\ref{table:results-woman} shows the SSIM and PSNR values obtained in the visualization domain for a dataset of 400 CT images. 
Comparing the filtering methods with respect to the error measures, then the error values are in favor of the proposed targeted filtering method (TF) higher noise levels. Here it is important to note the fundamental difference between TF and PM. The performance of PM is determined based the estimation of a contrast parameter for the nonlinear mapping function, whereas TF is not. The only parameter required to be determined in TF (as with all iterative methods) is the stopping time to avoid trivial solutions. Thus, disregarding the stopping time, \emph{TF is a non-parametric non-linear diffusion scheme} which behaves similarly to PM diffusion. 

Figure~\ref{fig:seq1_250} and~\ref{fig:seq1_350} visualize the corresponding images of slices 250 with noise level $\sigma = 5$ and 350 with noise level $\sigma = 10$. In addition to the visualizations, respective details are depicted. Visually, the proposed diffusion scheme produces superior results close to edges compared to LD and PM diffusion indicated by the arrows in both figures. LD oversmooths the image and PM simply retains noise close to edges. AD preserves edges well and produces high PSNR and SSIM values but approximately homogeneous regions appear oversmoothed. In Fig.~\ref{fig:seq1_350} it is clear that regions indicated by the arrows have been retained in the proposed method whereas the other diffusion techniques have removed the structure. 

\begin{table}[tb]
  \begin{center}
    \begin{tabular*}{\columnwidth}{@{\extracolsep{\fill}}r cccc  ccc}
      \hline
      \noalign{\smallskip}
      & $\sigma$ & $\hat{\sigma}_m$ & LD & PM & TF & AD \\
      \noalign{\smallskip}
      \hline
      \noalign{\smallskip}
      \multirow{3}{*}{SSIM}  
       & 5 & 51.19 &   0.89 $\pm$ 0.005 &   0.92 $\pm$ 0.004 &   0.93 $\pm$ 0.003 &   0.94 $\pm$ 0.003 \\ 
 & 10 & 102.38 &   0.84 $\pm$ 0.004 &   0.87 $\pm$ 0.005 &   0.89 $\pm$ 0.005 &   0.87 $\pm$ 0.006 \\ 
 & 15 & 153.56 &   0.82 $\pm$ 0.006 &   0.83 $\pm$ 0.005 &   0.86 $\pm$ 0.006 &   0.82 $\pm$ 0.005 
\\
      \noalign{\smallskip}
      \hline
      \noalign{\smallskip}
      \multirow{3}{*}{PSNR} 
       & 5 & 51.19 &  28.44 $\pm$ 0.37 &  30.82 $\pm$ 0.56 &  30.76 $\pm$ 0.51 &  32.18 $\pm$ 0.72 \\ 
 & 10 & 102.38 &  25.92 $\pm$ 0.45 &  27.68 $\pm$ 0.49 &  28.13 $\pm$ 0.53 &  27.88 $\pm$ 0.49 \\ 
 & 15 & 153.56 &  24.81 $\pm$ 0.54 &  25.74 $\pm$ 0.53 &  26.82 $\pm$ 0.59 &  25.38 $\pm$ 0.51  
 \\
      \noalign{\smallskip}
      \hline
    \end{tabular*}
  \end{center}
  \caption{SSIM and PSNR values. $\hat{\sigma}_m$ was computed according to \eqref{eq:var-trans}.}
  \label{table:results-woman}
\end{table}

\renewcommand{\imgsize}{0.23}


\newcommand{\mydrawcircleB}[1]{ %
\begin{tikzpicture}
  \node[anchor=south west,inner sep=0mm] at (0,0) { 
    \includegraphics[width=\imgsize\linewidth]{#1} };
  \draw [red, ultra thick, ->] (0.6,1.5) -- (1.3,1);
\end{tikzpicture}
}

\newcommand{\insertsliceB}[1]{%
  Original & Noisy & Vis. Original & Vis. Noisy \\
  \includegraphics[width=\imgsize\linewidth]{slice#1_vis}  &
  \includegraphics[width=\imgsize\linewidth]{slice#1_noisy_vis}  &
  \mydrawcircleB{slice#1_zoom_vis} &
  \includegraphics[width=\imgsize\linewidth]{slice#1_zoom_noisy_vis}   \\
  LD & PM & TF & AD \\
  \includegraphics[width=\imgsize\linewidth]{slice#1LD_filt_vis}  &
  \includegraphics[width=\imgsize\linewidth]{slice#1PM_filt_vis}  &
  \includegraphics[width=\imgsize\linewidth]{slice#1VED_filt_vis}  &
  \includegraphics[width=\imgsize\linewidth]{slice#1AD_filt_vis} \\
  \hspace*{-1mm}  \mydrawcircleB{slice#1LD_zoom_filt_vis} &
  \hspace*{-1mm}  \mydrawcircleB{slice#1PM_zoom_filt_vis} &
  \hspace*{-1mm}  \mydrawcircleB{slice#1VED_zoom_filt_vis} &
  \hspace*{-1mm}  \mydrawcircleB{slice#1AD_zoom_filt_vis} \\
}

\newcommand{\slice}{250}
\begin{figure}[tbh]
  \begin{center}
    \begin{tabular}{cccc}
      \insertsliceB{\slice}
    \end{tabular}
  \end{center}
  \caption{Slice 250. Noise level $\sigma = 5$. \emph{ Details best viewed on monitor}}
  \label{fig:seq1_250}
\end{figure}


\renewcommand{\slice}{350}

\newcommand{\mydrawcircle}[1]{ %
\begin{tikzpicture}
  \node[anchor=south west,inner sep=0mm] at (0,0) { 
    \includegraphics[width=\imgsize\linewidth]{#1} };
  \draw [red, ultra thick, ->] (2.7,2.1) -- (2.4,2.65);
  \draw [green, ultra thick, ->] (0.9,1.7) -- (0.4,2.1);
\end{tikzpicture}
}

\newcommand{\insertslice}[1]{%
  Original & Noisy & Vis. Original & Vis. Noisy \\
  \includegraphics[width=\imgsize\linewidth]{slice#1_vis}  &
  \includegraphics[width=\imgsize\linewidth]{slice#1_noisy_vis}  &
\hspace*{-1mm}  \mydrawcircle{slice#1_zoom_vis} &
  \includegraphics[width=\imgsize\linewidth]{slice#1_zoom_noisy_vis}   \\
  LD & PM & TF & AD \\
  \includegraphics[width=\imgsize\linewidth]{slice#1LD_filt_vis}  &
  \includegraphics[width=\imgsize\linewidth]{slice#1PM_filt_vis}  &
  \includegraphics[width=\imgsize\linewidth]{slice#1VED_filt_vis}  &
  \includegraphics[width=\imgsize\linewidth]{slice#1AD_filt_vis} \\
  \hspace*{-1mm} \mydrawcircle{slice#1LD_zoom_filt_vis} &
  \hspace*{-1mm}  \mydrawcircle{slice#1PM_zoom_filt_vis} &
  \hspace*{-1mm}  \mydrawcircle{slice#1VED_zoom_filt_vis} &
  \hspace*{-1mm}  \mydrawcircle{slice#1AD_zoom_filt_vis} \\
}

\begin{figure}[tbh]
  \begin{center}
    \begin{tabular}{cccc}
      \insertslice{\slice} 
    \end{tabular}
  \end{center}
  \caption{Slice 350. Noise level $\sigma = 10$. \emph{ Details best viewed on monitor}}
  \label{fig:seq1_350}
\end{figure}

\section{Conclusion}
The performance of image denoising methods has to be assessed with respect to the respective application. In our case, we considered the application of denoising medical images and limit ourselves to diffusion methods. The relevant quality criteria is the result of the visualization after applying a mapping function. We have used the mapping function to derive a novel nonlinear diffusion scheme for targeted iterative diffusion and evaluated the method on a data set of CT images with different noise levels. The proposed method is non-parametric in the sense that it is application-driven rather than data-driven.

\section{Acknowledgment}
This research has received funding from the Swedish Research Council through a grant for the project \emph{Visualization-adaptive Iterative Denoising of Images}, from the ECs 7th Framework Programme (FP7/2007-2013), grant agreement 247947 (GARNICS). From Vinnova project \emph{Online laboratory for medical image analysis}, and Swedish Foundation for Strategic Research, grant SM10-0022. We thank Hanno Scharr at Forschungszentrum J{\"u}lich, Germany, for discussion on the implementation of the anisotropic diffusion scheme.


\newpage

\bibliographystyle{splncs}
\bibliography{egbib}

@inproceedings{diva2:274026,
  author = {Bigun, Josef and Granlund, Gösta H.},
  title = {{Optimal Orientation Detection of Linear Symmetry}},
  Booktitle = {Proceedings of the IEEE First International Conference on Computer Vision},
  year = {1987},
  pages = {433--438},
}

@misc{code,
  author = {{\AA}str{\"o}m, Freddie},
  title = {{Implementation of Targeted Iterative Filtering}},
  howpublished = "\url{http://liu.diva-portal.org/smash/get/diva2:608779/SOFTWARE02}, SSVM'13",
  year = {2013}, 
}

@inproceedings{Forstner87,
 author={F{\"o}rstner, W. and G{\"u}lch, E.},
 title={A fast operator for detection and precise location of distinct points, corners and centres of circular features},
 booktitle={ISPRS Intercommission, Workshop, Interlaken, pp. 149-155.},
 year={1987}
}

@incollection{Wolfgang2000,
   author = {F{\"o}rstner, Wolfgang},
   affiliation = {Bonn University Institute for Photogrammetry Nussallee 15 D-53115 Bonn},
   title = {Image Preprocessing for Feature Extraction in Digital Intensity, Color and Range Images},
   booktitle = {Geomatic Method for the Analysis of Data in the Earth Sciences},
   series = {LNES},
   isbn = {978-3-540-67476-4},
   pages = {165-189},
   volume = {95},
   year = {2000}
}

@incollection {springerlink:10.1007/11550518_61,
   author = {Felsberg, Michael and Jonsson, Erik},
   affiliation = {Computer Vision Laboratory, Linköping University, SE-58183 Linköping, Sweden},
   title = {Energy Tensors: Quadratic, Phase Invariant Image Operators},
   booktitle = {Pattern Recognition},
   series = {LNCS},
   isbn = {978-3-540-28703-2},
   publisher = {Springer},
   pages = {493-500},
   volume = {3663},
   year = {2005}
}

@article{Rudin:1992:NTV:142273.142312,
 author = {Rudin, Leonid I. and Osher, Stanley and Fatemi, Emad},
 title = {Nonlinear total variation based noise removal algorithms},
 journal = {Phys. D},
 issue_date = {Nov. 1, 1992},
 volume = {60},
 number = {1-4},
 month = nov,
 year = {1992},
 issn = {0167-2789},
 pages = {259--268},
 numpages = {10},
 acmid = {142312},
 publisher = {Elsevier Science Publishers B. V.},
}

@book{vollmer2010infrared,
  title={Infrared Thermal Imaging: Fundamentals, Research and Applications},
  author={Vollmer, M. and M{\"o}llmann, K.P.},
  isbn={9783527407170},
  url={http://books.google.se/books?id=5SSiZAMwxtYC},
  year={2010},
  publisher={John Wiley \& Sons}
}

@inproceedings{Dicarlo2000,
  author = {Jeffrey M. DiCarlo and Brian A. Wandell},
  title = {Rendering high dynamic range images},
  booktitle = {Proceedings of the SPIE: Image Sensors 3965},
  pages = {392-401},
  year = {2000}
}

@inproceedings{Debevec:1997:RHD:258734.258884,
 author = {Debevec, Paul E. and Malik, Jitendra},
 title = {Recovering high dynamic range radiance maps from photographs},
 booktitle = {SIGGRAPH '97},
 year = {1997},
 pages = {369--378},
}

@book{prokopgalanski2003,
 author = {Mathias Prokop and Michael Galanski},
 title = {Spiral and Multislice Computed Tomography of the Body},
 year = {2003},
 isbn = {0865118701},
 publisher = {Thieme Verlag},
}

@inproceedings{Baravdish11,
 author={George Baravdish and Olof Svensson},
 title={Image reconstruction with p(x)-parabolic equation},
 booktitle={ICIPE2011, Orlando Florida},
 year={2011}
}

@ARTICLE{Perona90scale-spaceand,
    author = {Pietro Perona and Jitendra Malik},
    title = {Scale-space and edge detection using anisotropic diffusion},
    journal = {IEEE Transactions, PAMI},
    year = {1990},
    volume = {12},
    pages = {629-639}
}

@book{Weickert96anisotropicdiffusion,
    author = {Joachim Weickert},
    title = {Anisotropic Diffusion In Image Processing},
    year = {1998},
    publisher={ECMI Series, Teubner-Verlag, Stuttgart, Germany}
}

@ARTICLE{Wang2004,
author={Zhou Wang and Bovik, A.C. and Sheikh, H.R. and Simoncelli, E.P.},
journal={IEEE Trans}, 
title={Image quality assessment: from error visibility to structural similarity},
year={2004},
month={april} ,
volume={13},
number={4},
pages={600-612},
doi={10.1109/TIP.2003.819861},
ISSN={1057-7149},}

@incollection{Kacur97,
 author = {Jozef Ka\v{c}ur and Karol Mikula},
 title = {Slowed Anisotropic Diffusion},
 year = {1997},
 booktitle = {Scale-Space Theory in Computer Vision},
 series = {LNCS},
 publisher = {Springer Berlin / Heidelberg},
 isbn = {978-3-540-63167-5},
 pages = {357-360},
 volume = {1252}
}

@ARTICLE{Fberg2011,
author={Felsberg, M.},
journal={Image Processing, IEEE Transactions on},
title={Autocorrelation-Driven Diffusion Filtering},
year={2011},
month={july },
volume={20},
number={7},
pages={1797 -1806},
doi={10.1109/TIP.2011.2107330},
ISSN={1057-7149}
}

\end{document}